\algnewcommand{\LineComment}[1]{\State \(\triangleright\) #1}
\newcommand*{\colorboxed}{}
\def\colorboxed#1#{%
  \colorboxedAux{#1}%
}
\newcommand*{\colorboxedAux}[3]{%
  % #1: optional argument for color model
  % #2: color specification
  % #3: formula
  \begingroup
    \colorlet{cb@saved}{.}%
    \color#1{#2}%
    \boxed{%
      \color{cb@saved}%
      #3%
    }%
  \endgroup
}
\newtheoremstyle{compactSty}% name of the style to be used
  {4pt}% measure of space to leave above the theorem. E.g.: 3pt
  {4pt}% measure of space to leave below the theorem. E.g.: 3pt
  {}% name of font to use in the body of the theorem
  {}% measure of space to indent
  {}% name of head font
  {}% punctuation between head and body
  { }% space after theorem head; " " = normal interword space
  {}% Manually specify head
\newtheorem{theorem}{Theorem}[section]
\newtheorem{lemma}[theorem]{\TE{Lemma}}
\def\Eqref Eq:#1:{\eqref{eq:#1}}
\newcommand{\E}[1]{\mathbf{#1}}
\newcommand{\TE}[1]{\textbf{#1}}
\newcommand{\FPP}[2]{\frac{\partial{#1}}{\partial{#2}}}
\newcommand{\FPPROW}[2]{{\partial{#1}}/{\partial{#2}}}
\newcommand{\FPPT}[2]{\frac{\partial^2{#1}}{\partial{#2}^2}}
\newcommand{\FDD}[2]{\frac{d{#1}}{d{#2}}}
\newcommand{\TWO}[2]{\left(\begin{array}{cc}{#1} & {#2}\end{array}\right)}
\newcommand{\THREE}[3]{\left(\begin{array}{ccc}{#1} & {#2} & {#3}\end{array}\right)}
\newcommand{\argmax}[1]{\underset{#1}{\E{argmax}}\;}
\newcommand{\TWORCellC}[2]{\begin{tabular}{@{}c@{}}#1 \\ #2\end{tabular}}
\newcommand{\CONF}{\mathbb{C}}
\newcommand{\CCONF}{\CONF_c}
\newcommand{\ATLEAST}{\boldsymbol{o}}
\newcommand{\ATMOST}{\boldsymbol{O}}
\newcommand{\balpha}{\boldsymbol{\alpha}}
\title{\large\bf Fast Motion Planning for High-DOF Robot Systems Using Hierarchical System Identification\vspace{-10px}}
\author{Biao Jia$^*$\quad\quad Zherong Pan$^*$\quad\quad Dinesh Manocha\vspace{-5px} \\
%\footnotesize{Extended Report: }\vspace{5px} \\
%\footnotesize{\href{https://www.dropbox.com/s/3m067c2yhd79ysk/Trajectory_Optimization_for_High_DOF_Manipulation_Using_Hierarchical_System_Identification.pdf?dl=0}{https://www.dropbox.com/s/3m067c2yhd79ysk/Trajectory\_Optimization\_for\_High\_DOF\_Manipulation\_Using\_Hierarchical\_System\_Identification.pdf?dl=0}}\vspace{-30px}\\
\thanks{$^*$ indicates joint first author}
\thanks{Biao Jia$^*$ is with the Department of Computer Science, University of Maryland at College Park. E-mail: {\tt\small biao@cs.umd.edu} }
\thanks{Zherong$^*$ is with the Department of Computer Science, University of North Carolina at Chapel Hill. E-mail:{\tt\small zherong@cs.unc.edu}}
\thanks{Dinesh Manocha is with the Department of Computer Science and Electrical \& Computer Engineering,
University of Maryland at College Park.  E-mail:{\tt\small dm@cs.umd.edu}}}
\begin{document}
\maketitle
%%%%%%%%%%%%%%%%%%%%%%%%%%%%%%%%%%%%%%%%%%%%%%%%%%%%%%%%%%%%%%%%%%%%%%%%%%%%%%%%
\begin{abstract}
We present an  efficient  algorithm for motion planning and controlling a robot system with a high number of degrees-of-freedom (DOF). These systems include high-DOF soft robots or an articulated robot interacting with a deformable environment.  Our approach considers dynamics constraints and we present a novel technique to accelerate the forward dynamics computation using a data-driven method. We precompute the forward dynamics function of the robot system on a hierarchical adaptive grid. Furthermore, we exploit the properties of underactuated robot systems and perform these computations in a lower dimensional space. We provide error bounds for approximate forward dynamics computation and use our approach for optimization-based motion planning and reinforcement-learning-based feedback control. We highlight the performance on two high-DOF robot systems: a high-DOF line-actuated elastic robot arm and an underwater swimming robot operating in water. Compared to prior techniques based on exact dynamics evaluation, we observe one to two orders of magnitude improvement in the performance.
\end{abstract}

\newif\ifarxiv
\arxivtrue
%\arxivfalse
%%%%%%%%%%%%%%%%%%%%%%%%%%%%%%%%%%%%%%%%%%%%%%%%%%%%%%%%%%%%%%%%%%%%%%%%%%%%%%%%
\section{Introduction}
High-DOF robot systems are increasingly used for different applications. These systems include soft robots with deformable joints~\cite{fras2018fluidical,fras2018bio}, which have a high-dimensional configuration space. Other scenarios correspond to articulated robots interacting with highly deformable objects like cloth~\cite{Erickson2017DeepHM,Clegg2017LearningTN} or deformable environments like fluids~\cite{Kanso2005,munnier:hal-00394744}. In these cases, the number of degrees-of-freedom (DOF $\CONF$, $N=|\CONF|$) can be more than $1000$. As we try to satisfy dynamics constraints, the repeated evaluation of forward dynamics of these robots becomes a major bottleneck. For example, an elastically soft robot can be modeled using the finite-element method (FEM) \cite{fung2017classical}, which discretizes the robot into thousands of points. However, each forward dynamics evaluation reduces to factorizing a large sparse matrix, the complexity of which is $\ATLEAST(N^{1.5})$ \cite{doi:10.1137/0909057}. An articulated robot swimming in water can be modeled using the boundary element method (BEM) \cite{Kanso2005} by discretizing the fluid potential using thousands of points on the robot's surface. In this case, each evaluation of the forward dynamics function involves inverting a large, dense matrix, the complexity of which is $\ATMOST(N^2\E{log}(N))$ \cite{doi:10.1002/nme.1620382307}.

The high computational cost of forward dynamics becomes a major bottleneck for dynamics-constrained motion planning and feedback control algorithms. To compute a feasible motion plan or optimize a feedback controller, these algorithms typically evaluate the forward dynamics function hundreds of times per iteration. For example, a sampling-based planner \cite{lavalle1998rapidly} evaluates the feasibility of a sample using a forward dynamics simulator. An optimization-based planner \cite{betts1998survey} requires the Jacobian of the forward dynamics function to improve the motion plan during each iteration. Finally, a reinforcement learning algorithm \cite{Williams1992} must perform a large number of forward dynamics evaluations to compute the policy gradient and improve a feedback controller.

Several methods have been proposed to reduce the number of forward dynamics evaluations required by the motion planning and control algorithms. For sampling-based planners, the number of samples can be reduced by learning a prior sampling distribution centered around highly successful regions \cite{doi:10.1177/0278364916640908}. For optimization-based planners, the number of gradient evaluations can be reduced by using high-order convergent optimizers \cite{doi:10.1177/0278364914528132}. Moreover, many sampling-efficient algorithms \cite{mnih2016asynchronous} have been proposed to optimize feedback controllers. However, the number of forward dynamics evaluations is still on the level of thousands \cite{doi:10.1177/0278364914528132} or even millions \cite{mnih2016asynchronous}, which can become a major bottleneck for high-DOF robot systems.

Another method for improving the sampling efficiency is system identification \cite{NIPS2008_3385,7966110}. These methods approximate the exact forward dynamics model with a surrogate model. A good surrogate model should accurately approximate the exact model while being computationally efficient \cite{ASTROM1971123}. These methods are mostly learning-based and require a training dataset. However, it is unclear whether the learned surrogate dynamics model is accurate enough for a given planning task. Indeed, \cite{Ross:2012:ASI:3042573.3042816} noticed that the learned dataset could not cover the subset of a configuration space required to accomplish the planning or control task.

\TE{Main Results:} In this paper, we present a new efficient method for system identification of a high-DOF robot system. Our key observation is that, although the configuration space is high-dimensional, these robot systems are highly underactuated, with only a few controlled DOFs. The number of controlled DOFs typically corresponds to the number of actuators in the system and applications tend to use a small number of actuators for lower cost \cite{Skouras:2013:CDA:2461912.2461979,xiao2015locomotive}. As a result, the state of the remaining DOFs can be formulated as a function of the few controlled DOFs, leading to a function $\E{f}:\CCONF\to\CONF$, where $\CCONF$ is the space of the controlled DOFs. Since $\CCONF$ is low-dimensional, sampling in $\CCONF$ does not suffer from a-curse-of-dimensionality. Therefore, our method accelerates the evaluations of $\E{f}$ by precomputing and storing $\E{f}$ on the vertices of a hierarchical grid. The hierarchical grid is a high-dimensional extension of the octree in 3D, where each parent node has $2^{|\CCONF|}$ children. This hierarchical data structure has two desirable features. First, the error due to our approximate forward dynamics function can be bounded. Second, we construct the grid in an on-demand manner, where new sample points are inserted only when a motion planner requires more samples. As a result, the sampled dataset covers exactly the part of the configuration space required by the given motion planning task and the construction of the hierarchical grid is efficient.

We have combined our dynamics evaluation algorithm with optimization-based motion planning and reinforcement-learning-based feedback control. We evaluate the performance of these algorithms on two benchmarks: a $1575$-dimensional line-actuated elastic robot arm and a $1415$-dimensional underwater swimming robot system. Our use of a hierarchical grid reduces the number of forward dynamics evaluations by one to two orders of magnitude and a plan can be computed within $2$ hours on a desktop machine. We show that the error of our system identification method can be bounded and the algorithm converges to the exact solution of the dynamics constrained motion planning problem as the error bound tends to zero.
\section{Related Work}
In this section, we give a brief overview of prior work on high-DOF robot systems, motion planning and control with dynamics constraints, and system identification. 

\TE{High-DOF Robot systems} are used in various applications. This is due to the increasing use of soft robots \cite{1292}. A popular method for numerically modeling these soft robots is the finite-element method (FEM) \cite{fung2017classical}. FEM represents a soft robot using a general mesh with thousands of vertices or DOFs. The other set of applications includes a low-DOF articulated robot interacting with high-DOF passive objects, such as when a swimming robot interacts with fluids \cite{Kanso2005}. To numerically model the robot-fluid interaction, some methods represent the state of the fluid using the boundary element method (BEM) \cite{doi:10.1002/nme.1620382307}. BEM represents the fluid state using a surface mesh that has hundreds of DOFs on a 2D manifold and tens of thousands of DOFs in 3D workspaces. Another example is a robot arm manipulating a piece of cloth \cite{Erickson2017DeepHM,Clegg2017LearningTN,biao}, where the state of the cloth is also discretized using FEM in \cite{biao}; the cloth is also represented using a mesh with thousands of DOFs. Both FEM and BEM induce a forward dynamics function $\E{f}$, the evaluation of which involves matrix factorization and inversion, where the matrix is of size $\ATMOST(N\times N)$. As a result, the complexity of evaluating $\E{f}$ is $\ATLEAST(N^{1.5})$ using FEM \cite{doi:10.1137/0909057} and $\ATMOST(N^2\E{log}(N))$ using BEM \cite{doi:10.1002/nme.1620382307}.

\TE{Dynamics-Constrained Motion Planning} algorithms can be optimization-based or sampling-based methods. Optimization-based methods are used to compute locally optimal motion plans \cite{doi:10.1177/0278364914528132,betts1998survey} by minimizing a set of state-dependent or control-dependent objective functions using the dynamics constraints. Such optimization is performed iteratively, where each iteration involves evaluating the forward dynamics function $\E{f}$ and its differentials. On the other hand, sampling-based methods \cite{lavalle1998rapidly,journals/corr/abs-1205-5088} seek globally feasible or optimal motion plans. These methods repeatedly evaluate proposed motion plans by calling the forward dynamics function $\E{f}$. Feedback control algorithms also include a large number of evaluations. Differential dynamic programming \cite{6386025} relies on $\E{f}$ evaluations to provide state and control differentials. These differentials are used to optimize a trajectory over a short horizon. Finally, reinforcement learning algorithms \cite{Williams1992} optimize the feedback controller parameters by repeatedly computing the policy gradient, which requires a large number of evaluations of $\E{f}$. Our method can be combined with all these methods.

\TE{System Identification} has been widely used to approximate the forward dynamics function $\E{f}$ when the evaluation of $\E{f}$ and its differentials is costly. Most system identification methods are data-driven and approximate the system dynamics using non-parametric models such as the Gaussian mixture model \cite{5537836}, Gaussian process \cite{NIPS2008_3385,LGPR2009}, neural networks \cite{55121}, and nearest-neighbor computation \cite{7902097}. Our method based on the hierarchical grid is also non-parametric. In most prior learning methods, training data are collected before using the identified system for motion planning. Recently, system identification has been combined with reinforcement learning \cite{yu2017preparing,levine2014learning} for more efficient data-sampling of low-DOF dynamics systems. However, these methods do not guarantee the accuracy of the resulting approximation. In contrast, our dynamics evaluation method can be easily combined with motion planning algorithms, it handles high-DOF systems, and it provides guaranteed accuracy. 
\section{\label{sec:prob}Problem Formulation}
In this section, we introduce the formulation of high-DOF robot systems and dynamics evaluations. Next, we formulate the problem of dynamics-constrained motion planning for high-DOF robots.

\subsection{High-DOF Robot System Dynamics}
A high-DOF robot can be formulated as a dynamics system, the configuration space of which is denoted as $\CONF$. Each $\E{x}\in\CONF$ uniquely determines the kinematic state of the robot and the high-DOF environment with which it is interacting. To compute the dynamics state of the robot, we need $\E{x}$ and its time derivative $\dot{\E{x}}$. Given the dynamics state of the robot, its behavior is governed by the forward dynamics function:
\begin{align*}
\E{g}(\E{x}_i,\dot{\E{x}}_i,\E{u}_i)=(\E{x}_{i+1},\dot{\E{x}}_{i+1}),
\end{align*}
where the subscript denotes the timestep index, $\E{x}_i$ is the kinematic state at time instance $i\Delta t$, and $\Delta t$ is the timestep size. Finally, we denote $\E{u}_i\in\CCONF$ as the control input to the dynamics system (e.g., the joint torques for an articulated robot). In this work, we assume that the robot system is highly underactuated so that $|\E{u}|\ll|\E{x}|$. This assumption holds because the number of actuators in a robot is kept small to reduce manufacturing cost. For example, \cite{fras2018bio} proposed a soft robot octopus where each limb is controlled by only two air pumps. The forward dynamics function $\E{g}$ is a result of discretizing the Euler-Lagrangian equation governing the dynamics of the robot. In this work, we consider two robot systems: an elastically soft robot arm and an articulated robot swimming in water. 

\subsection{Elastically Soft Robot}
According to \cite{fung2017classical,6631138,largilliere:hal-01163760}, the elastically soft robot is governed by the following partial differential equation (PDE):
\begin{align}
\label{eq:V1}
\E{M}\FPPT{\E{x}}{\E{t}} = \E{p}(\E{x}) + \E{c}(\E{x},\E{u}),
\end{align}
where $\E{p}(\E{x})$ corresponds to the internal and external forces, $\E{M}$ is the mass matrix, and $\E{c}(\E{x},\E{u})$ is the control force. This system is discretized by representing the soft robot as a tetrahedra mesh with $\E{x}$ representing the vertex positions, as illustrated in \prettyref{fig:softArm}. Then the governing PDE (\prettyref{eq:V1}) is discretized using an implicit-Euler time integrator as follows:
\begin{align}
\label{eq:V1D}
\E{M}\frac{\E{x}_{i+1}-2\E{x}_i+\E{x}_{i-1}}{\Delta t^2} = \E{p}(\E{x}_{i+1}) + \E{c}(\E{x}_{i+1},\E{u}_i).
\end{align}
This function $\E{g}$ is costly to evaluate because solving for $\E{x}_{i+1}$ involves factorizing a large sparse matrix resulting from FEM discretization.
\begin{figure}
\vspace{2px}
\begin{minipage}[b]{0.2\textwidth}
  \centering
  \scalebox{0.8}{
  \includegraphics[angle=-90,width=0.95\textwidth]{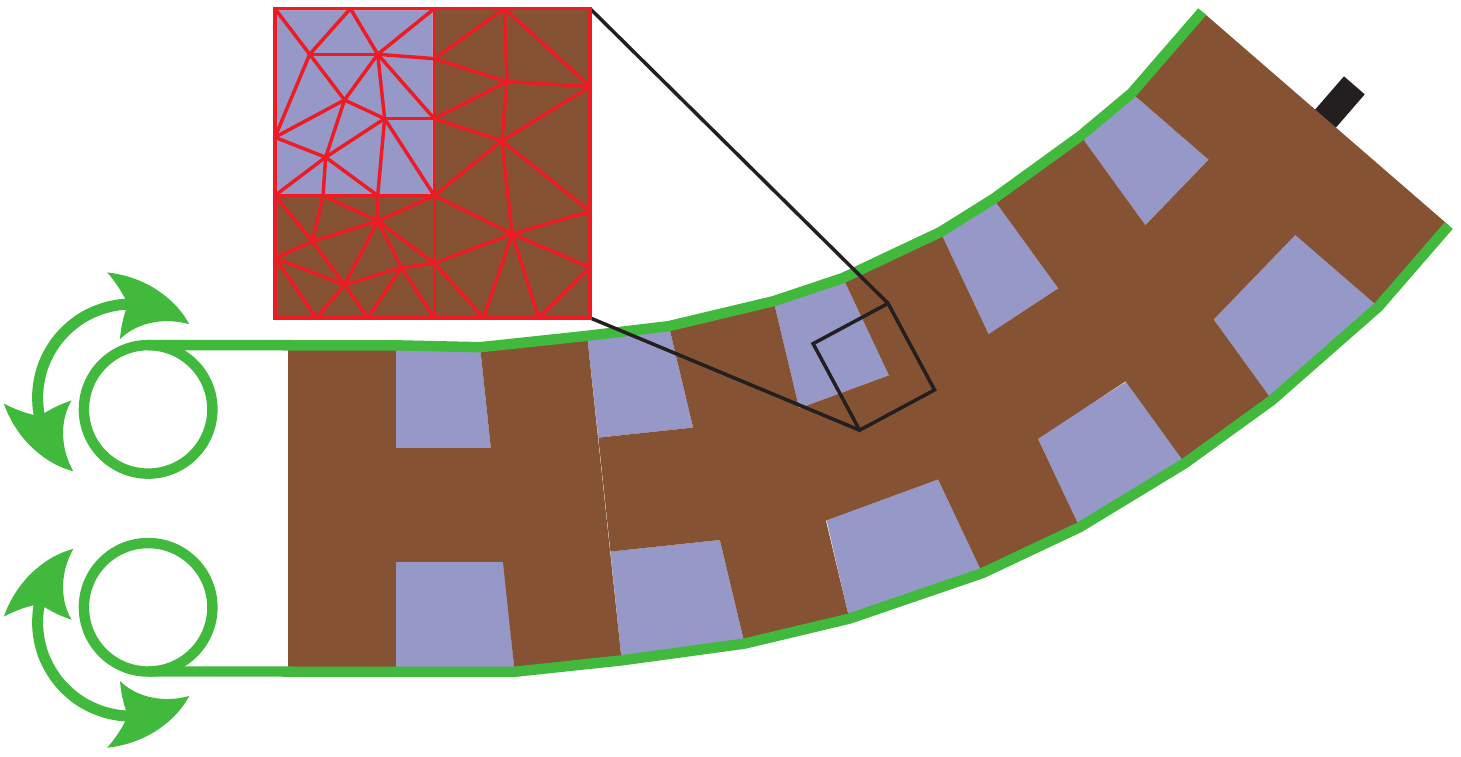}
  \put(-25,-85){\textcolor{red}{$\E{x}$}}
  %\put(0,110){(a)}
  %\put(0,50 ){(b)}
  }
  \vspace{-20px}
\end{minipage}
\hfill
\begin{minipage}[b]{.29\textwidth}
  \captionof{figure}{A 2D soft robot arm modeled using two materials (a stiffer material shown in brown and a softer material shown in blue), making it easy to deform. It is discretized by a tetrahedra mesh with thousands of vertices (red). However, the robot is controlled by two lines (green) attached to the left and right edges of the robot, so that $|\E{u}|=2$. The control command is the pulling force on each line (green circles).}
  \label{fig:softArm}
  \vspace{-25px}
\end{minipage}
\vspace{-12px}
\end{figure}

\subsection{Underwater Swimming Robot System}
Our second example, the articulated robot swimmer, has a low-dimensional configuration space. The configuration $\E{x}$ consists of joint parameters. This robot is interacting with a fluid, so the combined fluid/robot configuration space is high-dimensional. According to \cite{munnier:hal-00394744,Kanso2005}, the fluid's state can be simplified as a potential flow represented by the potential $\phi$ defined on the robot surface. This $\phi$ is discretized by sampling on each of the $P$ vertices of the robot's surface mesh, as shown in \prettyref{fig:swimRobo}. The kinematic state of the coupled system is $\left(\E{x},\phi\right)\in\CONF$. However, $\phi$ can be computed from $\E{x}$ and $\dot{\E{x}}$ using the BEM method, denoted as $\phi(\E{x},\dot{\E{x}})$.  The governing dynamics equation in this case is: 
\begin{tiny}
\begin{align}
\label{eq:V2}
\E{M}(\E{x})\FPPT{\E{x}}{\E{t}} = \E{C}(\E{x},\dot{\E{x}}) + \E{J}(\E{x})\E{u} + 
\left[\FDD{}{t}\FPP{}{\dot{\E{x}}}-\FPP{}{\E{x}}\right]\int\frac{1}{2}\phi(\E{x},\dot{\E{x}})\FPP{\phi(\E{x},\dot{\E{x}})}{\E{n}},
\end{align}
\end{tiny}
where $\E{M}$ is the generalized mass matrix, $\E{C}$ is the centrifugal and Coriolis force, and $\E{J}(\E{x})$ is the Jacobian matrix. Finally, the last term in \prettyref{eq:V2} is included to account for the fluid pressure forces, where the integral is over the surface of the robot and $\E{n}$ is the outward surface normal. Time discretization of \prettyref{eq:V2} is performed using an explicit-Euler integrator, as follows:
\begin{align}
\label{eq:V2D}
&\E{M}(\E{x}_i)\frac{\E{x}_{i+1}-2\E{x}_i+\E{x}_{i-1}}{\Delta t^2} = \E{C}(\E{x}_i,\dot{\E{x}}_i) + \E{J}(\E{x}_i)\E{u}_i +   \\
&\left[\FDD{}{t}\FPP{}{\dot{\E{x}}_i}-\FPP{}{\E{x}_i}\right]\int\frac{1}{2}\phi(\E{x}_i,\dot{\E{x}}_i)\FPP{\phi(\E{x}_i,\dot{\E{x}}_i)}{\E{n}}.\nonumber
\end{align}
This function $\E{g}$ is costly to evaluate because computing $\phi(\E{x}_i,\dot{\E{x}}_i)$ involves inverting a large, dense matrix resulting from the BEM discretization.
\begin{figure}[ht]
\centering
\vspace{-5px}
\includegraphics[width=0.45\textwidth]{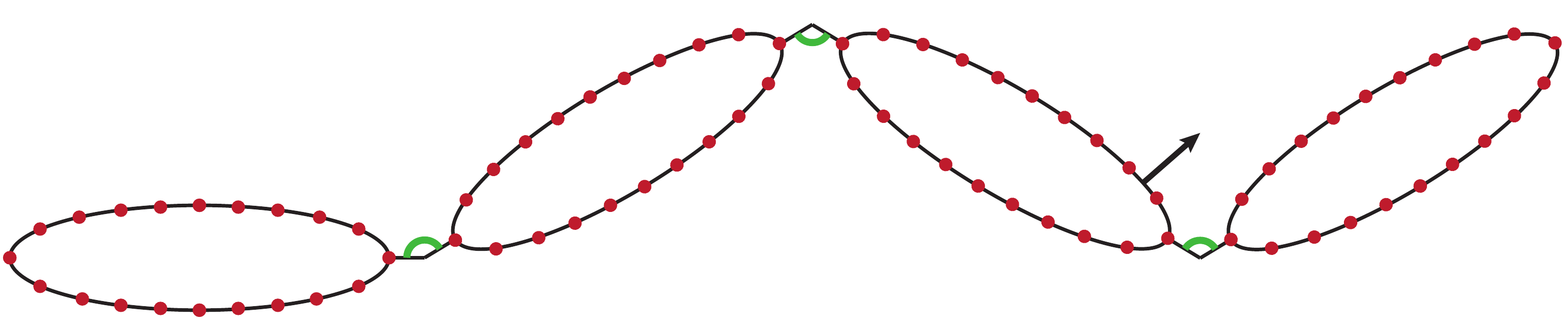}
\put(-200,22){\textcolor{red}{$\phi^p$}}
\put(-113,30){\textcolor{green}{$\E{x}$}}
\put(-53,32){$\E{n}^p$}
\caption{An articulated robot swimming in water. The robot consists of 4 rigid ellipses connected by hinge joints. Its configuration space is low-dimensional, consisting of joint parameters (green). The fluid state is high-dimensional and represented by a potential function $\phi$ discretized on the vertices of the robot's surface mesh (the $p$th component of $\phi^p$ in red). The kinetic energy is computed as a surface integral (the $p$th surface normal $\E{n}^p$ in the black arrow).}
\vspace{-10px}
\label{fig:swimRobo}
\end{figure}

\subsection{Dynamics-Constrained Motion Planning and Control}
We mainly focus on the specific problem of dynamics-constrained motion planning and feedback control. In the case of motion planning, we are given a reward function $\mathcal{R}(\E{x}_i,\E{u}_i)$ and our goal is to find a series of control commands $\E{u}_1,\cdots,\E{u}_{K-1}$ that maximizes the cumulative reward over a trajectory: $\E{x}_1,\cdots,\E{x}_K$, where $K$ is the planning horizon. This maximization is performed under dynamics constraints, i.e. $\E{g}$ must hold for every timestep:
\begin{footnotesize}
\begin{align}
\label{eq:MP}
\argmax{\E{u}_1,\cdots,\E{u}_{K-1}}\sum_{i=1}^K \mathcal{R}(\E{x}_i,\E{u}_i)\quad
\E{s.t.}\;\E{g}(\E{x}_i,\dot{\E{x}}_i,\E{u}_i)=(\E{x}_{i+1},\dot{\E{x}}_{i+1}).
\end{align}
\end{footnotesize}
In the case of feedback control, our goal is still to compute the control commands, but the commands are generated by a feedback controller $\pi(\E{x}_i,\E{w})=\E{u}_i$, where $\E{w}$ is the optimizable parameters of $\pi$:
\begin{footnotesize}
\begin{align}
\label{eq:CT}
\argmax{\E{w}}\sum_{i=1}^K \mathcal{R}(\E{x}_i,\E{u}_i)\;
\E{s.t.}\;\E{g}(\E{x}_i,\dot{\E{x}}_i,\pi(\E{x}_i,\E{w}))=(\E{x}_{i+1},\dot{\E{x}}_{i+1}).
\end{align}
\end{footnotesize}
In both formulations, $\E{g}$ must be evaluated tens of thousands of times to find the motion plan or controller parameters. In the next section, we propose a method to accelerate the evaluation of $\E{g}$. 
\section{\label{sec:method}Hierarchical System Identification}
Our method is based on the observation that high-DOF robot systems are highly underactuated. As a result, we can identify a novel function $\E{f}$ that maps from the low-dimensional control input $\E{u}$ to the high-dimensional kinematic state $\E{x}$. When the evaluation of $\E{f}$ is involved in the evaluation of $\E{g}$, it causes a bottleneck. We approximate $\E{f}$, instead of $\E{g}$, using our hierarchical system identification method. We first show how to identify this function for different robot systems and then describe our approach to constructing the hierarchical grid. 

\subsection{Function $\E{f}$ for an Elastically Soft Robot}
We identify function $\E{f}$ for an elastically soft robot. We first consider a quasistatic procedure in which all the dynamics behaviors are discarded and only the kinematic behaviors are considered. In this case, \prettyref{eq:V1D} becomes:
\begin{align}
\label{eq:V1DQ}
0 = \E{p}(\E{x}_{i+1}) + \E{c}(\E{x}_{i+1},\E{u}_i).
\end{align}
\prettyref{eq:V1DQ} defines our function $\E{f}(\E{u}_i)\triangleq\E{x}_{i+1}$ implicitly. We can also compute $\E{f}$ explicitly using Newton's method. This computation is costly due to the inversion of a large, sparse matrix $\FPPROW{\E{p}(\E{x}_{i+1})}{\E{x}_{i+1}}$.

Given $\E{f}$ that defines the quasistatic function, we can also compute the dynamics function. We assume that function $\E{f}$ is a shape embedding function such that for each $\E{x}$ there exists a latent parameter $\balpha$ and $\E{f}(\balpha)=\E{x}$. Note that $\balpha$ is not the control input, but a latent space parameter without any physical meaning. This relationship can be plugged into \prettyref{eq:V1} to derive a projected dynamics system in the space of the control input as:
\begin{align}
\label{eq:V1DP}
&\FPP{\E{f}(\balpha_{i+1})}{\balpha_{i+1}}^T\E{M}\frac{\E{f}(\balpha_{i+1})-2\E{f}(\balpha_i)+\E{f}(\balpha_{i-1})}{\Delta t^2} =   \\ 
&\FPP{\E{f}(\balpha_{i+1})}{\balpha_{i+1}}^T\left[\E{p}(\E{f}(\balpha_{i+1})) + \E{c}(\E{f}(\balpha_{i+1}),\E{u}_i)\right],\nonumber
\end{align}
where the left multiplication by $\FPPROW{\E{f}(\balpha_{i+1})}{\balpha_{i+1}}^T$ is due to Galerkin projection (see \cite{doi:10.1002/nme.3050} for more details). To time integrate \prettyref{eq:V1DP}, we first compute $\balpha_{i+1}$ from $\balpha_i,\balpha_{i-1}$ and then recover $\E{x}_{i+1}$ using $\E{x}_{i+1}=\E{f}(\balpha_{i+1})$. Computing $\balpha_{i+1}$ is very efficient because \prettyref{eq:V1DQ} represents a low-dimensional dynamics system. In summary, the computational bottleneck of $\E{g}$ lies in the computation of $\E{f}$, which is a mapping from the low-dimensional variables $\E{u},\balpha$ to the high-dimensional variable $\E{x}$.

\subsection{Function $\E{f}$ for an Underwater Swimming Robot}
We present our $\E{f}$ for the underwater swimming robot in this section. The kinematic state $\E{x}$ is low-dimensional and the fluid potential $\phi(\E{x},\dot{\E{x}})$ is high dimensional. We interpret this case as an underactuation because the state of the high-dimensional fluid changes due to the low-dimensional state of the articulated robot. The fluid potential is computed by the boundary condition that fluids and an articulated robot should have the same normal velocities at every boundary point:
\begin{align}
\label{eq:BEM1}
\left[\FPP{}{\E{n}^p}\right]\phi = {\E{n}^p}^T\E{J}(\E{x})\dot{\E{x}},
\end{align}
where $\left[\FPP{}{\E{n}_i}\right]$ is a linear operator that is used to compute $\phi$'s directional derivative along the normal direction $\E{n}^p$ at the $p$th surface sample (see \prettyref{fig:swimRobo}), which corresponds to the fluid's normal velocity. The right-hand side corresponds to the robot's normal velocity. Finally, we compute $\phi$ as:
\begin{align*}
\phi = \left[\FPP{}{\E{n}}\right]^{-1}\E{n}^T\E{J}(\E{x})\dot{\E{x}},
\end{align*}
where we assemble all the equations on all the $P$ surface samples from \prettyref{eq:BEM1}. Since there are a lot of surface sample points, $\left[\FPP{}{\E{n}}\right]$ is a large, dense $P\times P$ matrix and inverting it can be computationally cost. Therefore, we define:
\begin{align}
\label{eq:BEM2}
\E{f}(\E{x})\triangleq\left[\FPP{}{\E{n}}\right]^{-1}\E{n}^T\E{J}(\E{x}),
\end{align}
which encodes the computationally costly part of the forward dynamics function $\E{g}$. Here we use a modified notation so that the range of $\E{f}$ is not $\CONF$ but $\left(\E{x},\E{f}(\E{x})\dot{\E{x}}\right)\in\CONF$. However, our method is still valid with this formulation. We only need $\E{f}$ to be a mapping from a low-dimensional space to a high-dimensional space.

\subsection{Constructing the Hierarchical Grid}
The evaluation of the forward dynamics function $\E{g}$ requires the time-consuming evaluation of function $\E{f}$. Moreover, certain motion planning algorithms require $\FPPROW{\E{f}}{\E{x}}$ to solve \prettyref{eq:MP} or \prettyref{eq:CT}. In this section, we develop an approach to approximate function $\E{f}$ efficiently.

We accelerate $\E{f}$ using a hierarchical grid-based structure, as shown in \prettyref{fig:HSI} (a). Since the domain of $\E{f}$ is low-dimensional, this formulation does not suffer from a-curse-of-dimensionality. To evaluate $\E{f}(\E{x})$ using a grid with a grid size of $\Delta x$, we first identify the grid that contains $\E{x}$. This grid node has $2^{|\E{x}|}$ corner points, $\E{x}_c$, with coordinates:
\begin{align*}
\E{x}_c = \lfloor\E{x}/\Delta x\rfloor\Delta x , \lceil\E{x}/\Delta x\rceil\Delta x.
\end{align*}
For every corner point $\E{x}_c$, we precompute $\E{f}(\E{x}_c)$ and $\FPPROW{\E{f}}{\E{x}_c}$. Next, we can approximate $\E{f}(\E{x}),\FPPROW{\E{f}}{\E{x}}$ at an arbitrary point using a multivariate cubic spline interpolation \cite{0905.3564}. One main point of using a gird-based structure is that we can improve the approximation accuracy by refining the grid and halving the grid size to $\Delta x/2$. After repeated refinements, a hierarchy of grids is constructed.
\begin{algorithm}[h]
\caption{\label{Alg:HSI} Motion planner using hierarchical system identification}
\begin{algorithmic}[1]
\If{Solve motion planning problem}
\State \TE{Input:} Initial guess $\mathbb{P}^0\gets\E{u}_1,\dots,\E{u}_{K-1}$
\Else
\State \TE{Input:} Initial guess $\mathbb{P}^0\gets\E{w}^0$
\EndIf
\State \TE{Input:} Threshold of accuracy, $\eta$
\LineComment{Run multiple times of motion planning or control}
\For{$r=0,1,\cdots,R=\lceil\E{log}(\Delta x/\eta)\rceil$}
\LineComment{Refine the grid}
\State Set grid resolution to $\Delta x/2^r$
\LineComment{Use previous solution as initial guess}
\If{Solve motion planning problem}
\State Solve \prettyref{eq:MP} from initial guess $\mathbb{P}^r$
\State $\mathbb{P}^{r+1}\gets\E{u}_1^*,\dots,\E{u}_{K-1}^*$
\Else
\State Solve \prettyref{eq:CT} from initial guess $\mathbb{P}^r$
\State $\mathbb{P}^{r+1}\gets\E{w}^*$
\EndIf
\EndFor
\State Return $\mathbb{P}^R$
\end{algorithmic}
\end{algorithm}

Our main step is the construction of the grid hierarchy. We first show how to build the grid at a fixed resolution. Evaluating $\E{f}$ on every grid point is infeasible, but we do not know which grid points will be required before solving \prettyref{eq:MP}. We therefore choose to build the grid on demand. When the motion planner requires the evaluation of $\E{g}$ and $\FPPROW{\E{g}}{\E{x},\dot{\E{x}}}$, the evaluation of $\E{f},\FPPROW{\E{f}}{\E{x}}$ is also required. Next, we check each of the $2^{|\E{x}|}$ corner points, $\E{x}_c$. When $\E{f}(\E{x}_c)$ and $\FPPROW{\E{f}}{\E{x}_c}$ have not been computed, we invoke the costly procedure of computing $\E{f}$ exactly (\prettyref{eq:V1DQ} and \prettyref{eq:BEM2}) and store the results in our database. After all the corner points have been evaluated, we perform multivariate spline interpolation.
\begin{figure}[ht]
\centering
\vspace{-15px}
\scalebox{0.9}{
\includegraphics[width=0.45\textwidth]{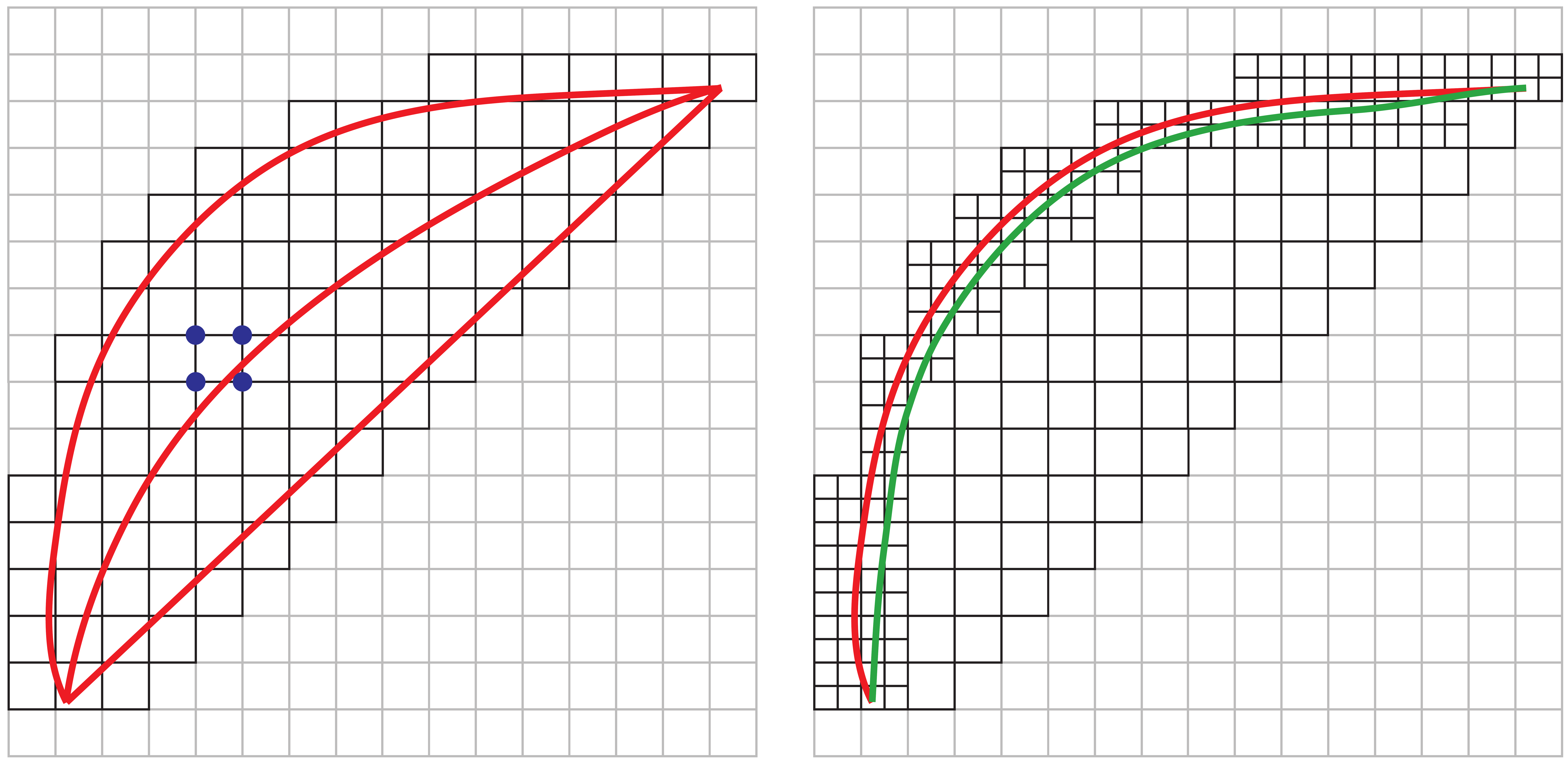}
\put(-160,10){(a)}
\put(-50 ,10){(b)}}
\vspace{-5px}
\caption{(a): We check and precompute $\E{f}$ on $2^2=4$ corner points (blue). The initial guess of a motion plan is the straight red line and the converged plan is the curved line. (b): During the next execution, we refine the grid using the last motion plan (red) as the initial guess. The next execution updates the red curve to the green curve. The two curves are close and the number of corner points on the fine grid is limited.}
\vspace{-10px}
\label{fig:HSI}
\end{figure}

Our on-demand scheme only constructs the grid at a fixed resolution or grid size. Our method allows the user to define a threshold $\eta$ and continually refines the grid for $R=\lceil\E{log}(\Delta x/\eta)\rceil$ times until $\Delta x/2^R<\eta$. Therefore, for each evaluation of $\E{f}$ and $\FPPROW{\E{f}}{\E{x}}$, we need to compute the appropriate resolution. Almost all motion planning \cite{doi:10.1177/0278364914528132} and control \cite{Williams1992} algorithms start from an initial motion plan or controller parameters and updates iteratively until convergence. We also want to use coarser grids when the algorithm is far from convergence and finer grids when it is close to converging. However, measuring the convergence of an algorithm is difficult and we do not have a unified solution for different motion planning algorithms. As a result, we choose to interleave motion planning or control algorithms with grid refinement. Specifically, we execute the motion planning or control algorithms $R$ times. During the $r$th execution of the algorithm, we use the result of the $(r-1)$th execution as the initial guess and use a grid resolution of $\Delta x/2^r$, as shown in \prettyref{Alg:HSI}. Note that the only difference between the $r$th execution and $(r-1)$th execution is that the accuracy of approximation for $\E{f}$ is improved. Therefore, the $r$th execution will only perturb the solution slightly. This property will confine the solution space covered by the $r$th execution and limit the number of new evaluations on the fine grid, as shown in \prettyref{fig:HSI} (b). Finally, we show that under mild assumptions, the solution of \prettyref{eq:MP} and \prettyref{eq:CT} found using an approximate $\E{f}$ will converge to that of the original problem with the exact $\E{f}$ as the number of refinements $\E{R}\to\infty$:
\begin{small}
\begin{lemma}\label{Lm:A}
Assuming the functions $\mathcal{R},\E{g}$ are sufficiently smooth, the solution space of $\E{x}$ is bounded, and the forward kinematic function is non-singular, then there exists a small enough $\Delta t$ such that solutions $\E{u}$ of \prettyref{Alg:HSI} will converge to a local minimum of \prettyref{eq:MP} or \prettyref{eq:CT} as $R\to\infty$, as long as the local minimum is strict (the Hessian of $\mathcal{R}$ has full rank).
\end{lemma}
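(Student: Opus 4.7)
The plan is to establish convergence by chaining three approximation results: (i) the cubic spline interpolation error of $\E{f}$ on the grid of size $\Delta x / 2^r$ vanishes as $r\to\infty$, (ii) this implies uniform $C^1$ convergence of the composed discrete-time dynamics $\T{\E{g}}_r\to\E{g}$ and of the objective of \prettyref{eq:MP} or \prettyref{eq:CT}, and (iii) convergence of local minimizers follows from the strict-local-minimum hypothesis via the implicit function theorem applied to the first-order stationarity conditions.

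First, I would invoke a standard error estimate for multivariate cubic spline interpolation on a tensor-product grid: if $\E{f}\in C^4$ on a bounded domain and $h$ is the grid size, then $\|\T{\E{f}}-\E{f}\|_\infty+\|\FPPROW{\T{\E{f}}}{\E{x}}-\FPPROW{\E{f}}{\E{x}}\|_\infty = \ATMOST(h^2)$. The required smoothness of $\E{f}$ comes from the smoothness of the defining equations \prettyref{eq:V1DQ} and \prettyref{eq:BEM2} combined with the non-singularity of the forward kinematic map, so that the implicit function theorem gives a smooth solution branch. At refinement $r$ we have $h=\Delta x/2^r$, so both $\T{\E{f}}_r$ and its Jacobian converge uniformly to $\E{f}$ and $\FPPROW{\E{f}}{\E{x}}$ on the (bounded) reachable set.

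Second, $\E{g}$ depends on $\E{f}$ and $\FPPROW{\E{f}}{\E{x}}$ through the smooth operations in \prettyref{eq:V1DP} and \prettyref{eq:V2D}. Choosing $\Delta t$ small enough guarantees that the implicit-Euler step in \prettyref{eq:V1D} is a contraction and that the explicit-Euler step in \prettyref{eq:V2D} is numerically stable, so the one-step map is a well-defined $C^1$ function of $\E{f}$, $\FPPROW{\E{f}}{\E{x}}$, and the state. Iterating over the finite horizon $K$ then gives $\|\T{\E{g}}_r^{(K)}-\E{g}^{(K)}\|_{C^1}\to 0$ on bounded trajectories, and since $\mathcal{R}$ is smooth, the cumulative-reward objectives in \prettyref{eq:MP} and \prettyref{eq:CT} converge uniformly in $C^2$ to the exact objective on the solution space.

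Third, let $\E{u}^*$ (respectively $\E{w}^*$) be a strict local minimum of the exact problem, so the Hessian of the reduced objective is non-singular there. $C^2$-convergence of objectives plus non-singular Hessian lets me apply the implicit function theorem to the stationarity equation $\nabla J_r(\E{u})=0$: for all $r$ large enough there exists a unique local minimum $\E{u}_r^*$ in a neighborhood of $\E{u}^*$, with $\E{u}_r^*\to\E{u}^*$. Because \prettyref{Alg:HSI} warm-starts the $r$th execution from $\mathbb{P}^{r-1}$, an inductive argument shows that once $\mathbb{P}^{r_0}$ lies in the basin of attraction of $\E{u}^*$, subsequent iterates remain in it and track $\E{u}_r^*$, giving $\mathbb{P}^R\to\E{u}^*$.

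The main obstacle is step three: passing from uniform convergence of objectives to convergence of minimizers. This is exactly why the hypotheses of a strict local minimum and a sufficiently small $\Delta t$ are needed; without them, the perturbed problem could exhibit bifurcating or drifting critical points, and the warm-started iterates produced by \prettyref{Alg:HSI} could escape the basin of attraction and fail to converge.
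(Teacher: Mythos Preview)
Your outline is essentially sound and reaches the same conclusion, but it proceeds by a different technical route than the paper. The paper keeps the constrained formulation \prettyref{eq:MP2} and first proves a separate lemma (\prettyref{Lm:LICQ}) that the constraint Jacobian has full row rank for small enough $\Delta t$; the key computation there is the decomposition $\FPPROW{\E{G}_i}{\E{x}_{i+1}}=\E{A}+\E{B}+\frac{1}{\Delta t^2}\E{C}$, with $\E{C}=\FPPROW{\E{f}}{\E{x}_{i+1}}^T\E{M}\FPPROW{\E{f}}{\E{x}_{i+1}}$ full rank by the non-singularity assumption, so the $1/\Delta t^2$ term dominates. LICQ then guarantees each $\mathbb{P}^r$ satisfies a KKT system, and convergence is obtained by a compactness/accumulation-point argument combined with the same warm-start induction you describe. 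Your approach instead eliminates the constraints, works with the reduced objective $J_r(\E{u})$, and applies the implicit function theorem directly to $\nabla J_r=0$; your use of $\Delta t$ (to make the implicit one-step map a well-defined smooth function) is the reduced-space analogue of the paper's LICQ step, since both amount to invertibility of $\FPPROW{\E{G}_i}{\E{x}_{i+1}}$. Your route is arguably more streamlined, while the paper's is more explicit about where the ``small $\Delta t$'' and ``non-singular forward kinematics'' hypotheses enter. One small point to tighten: in step (ii) you establish $C^1$ convergence of $\T{\E{g}}_r$ but then invoke $C^2$ convergence of the objective in step (iii); you should upgrade the spline estimate in step (i) to include second derivatives (which cubic splines do provide for $\E{f}\in C^4$) so that the Hessian of $J_r$ also converges.
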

\end{small}
The proof of \prettyref{Lm:A} is straightforward and we provide it
\ifarxiv
in our appendix for completeness.
\else
in our extended report downloadable from \cite{extendedreport}.
\fi
\section{Implementation and Performance\label{sec:results}}
\begin{figure*}[t]
\centering
\vspace{-15px}
\includegraphics[width=0.85\textwidth,]{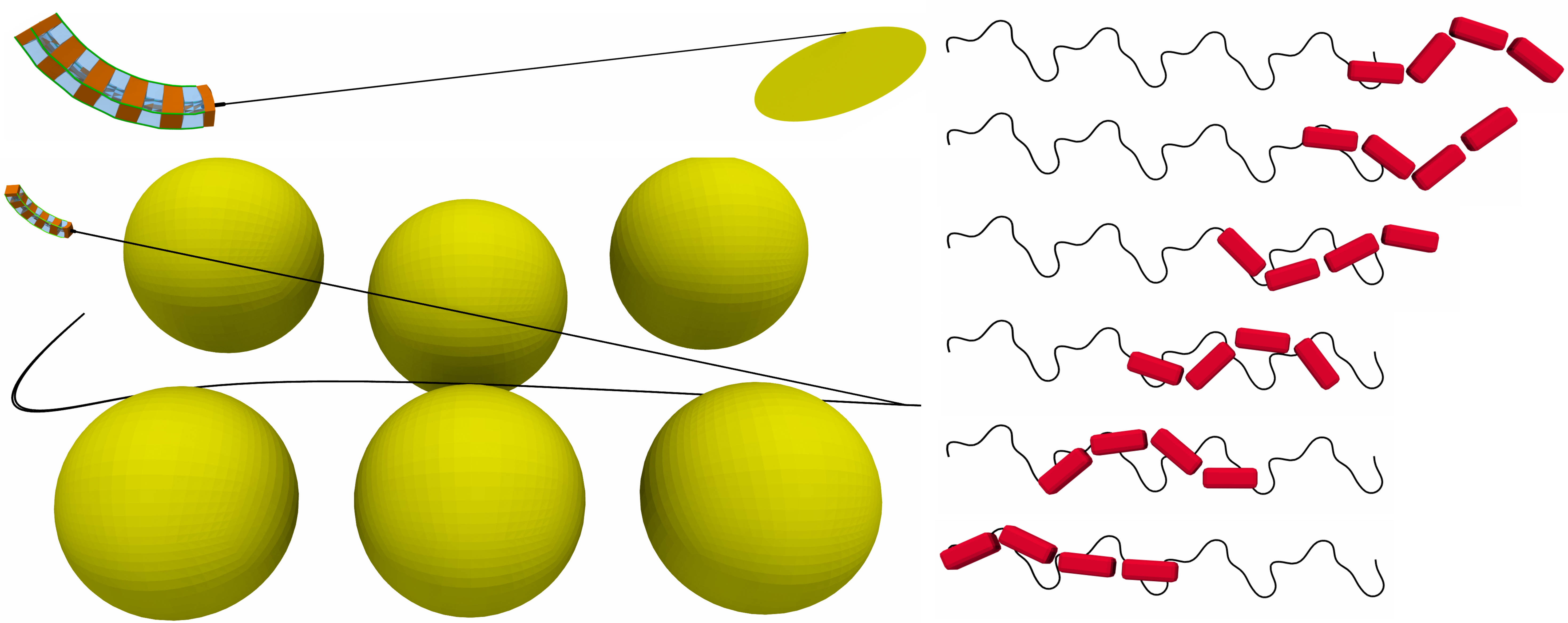}
\put(-450,150){(a)}
\put(-450,50 ){(b)}
\put(-10,100){(c)}
\vspace{-5px}
\caption{(a): A frame of a 3D soft robot arm attached with a laser cutter carving out a circle (yellow) on a metal surface. The arm is controlled by four lines attached to the four corners (green). (b): 3D soft robot arm steering the laser beam to avoid obstacles (yellow). (c): Several frames of a 3D underwater swimming robot moving forward. The robot is controlled by the 3-dimensional joint torques. The black line is the locus of the center-of-mass.}
\vspace{-5px}
\label{fig:frame}
\end{figure*}
%\begin{figure*}[t]
%\begin{minipage}[b]{0.7\textwidth}
%  \includegraphics[width=1.1\textwidth]{all_example.pdf}
%\end{minipage}
%\hfill
%\begin{minipage}[b]{.32\textwidth}
%  \captionof{figure}{(a): A frame of a 3D soft robot arm attached with a laser cutter carving out a circle (yellow) on a metal surface. The arm is controlled by four lines attached to the four corners (green). (b): 3D soft robot arm steering the laser beam to avoid obstacles (yellow). (c): Several frames of a 3D underwater swimming robot moving forward. The robot is controlled by the 3-dimensional joint torques. The black line is the locus of the center-of-mass.}
%  \label{fig:frame}
%\end{minipage}
%\vspace{-12px}
%\end{figure*}
We have evaluated our method on the 3D versions of the two robot systems described in \prettyref{sec:prob}. The computational cost of each substep of our algorithm is summarized in \prettyref{table:bench}.
\begin{figure*}[t]
\centering
\vspace{-8px}
\scalebox{1.03}{
\includegraphics[width=0.32\textwidth,trim={0.2cm 3.2cm 1.5cm 3.5cm},clip]{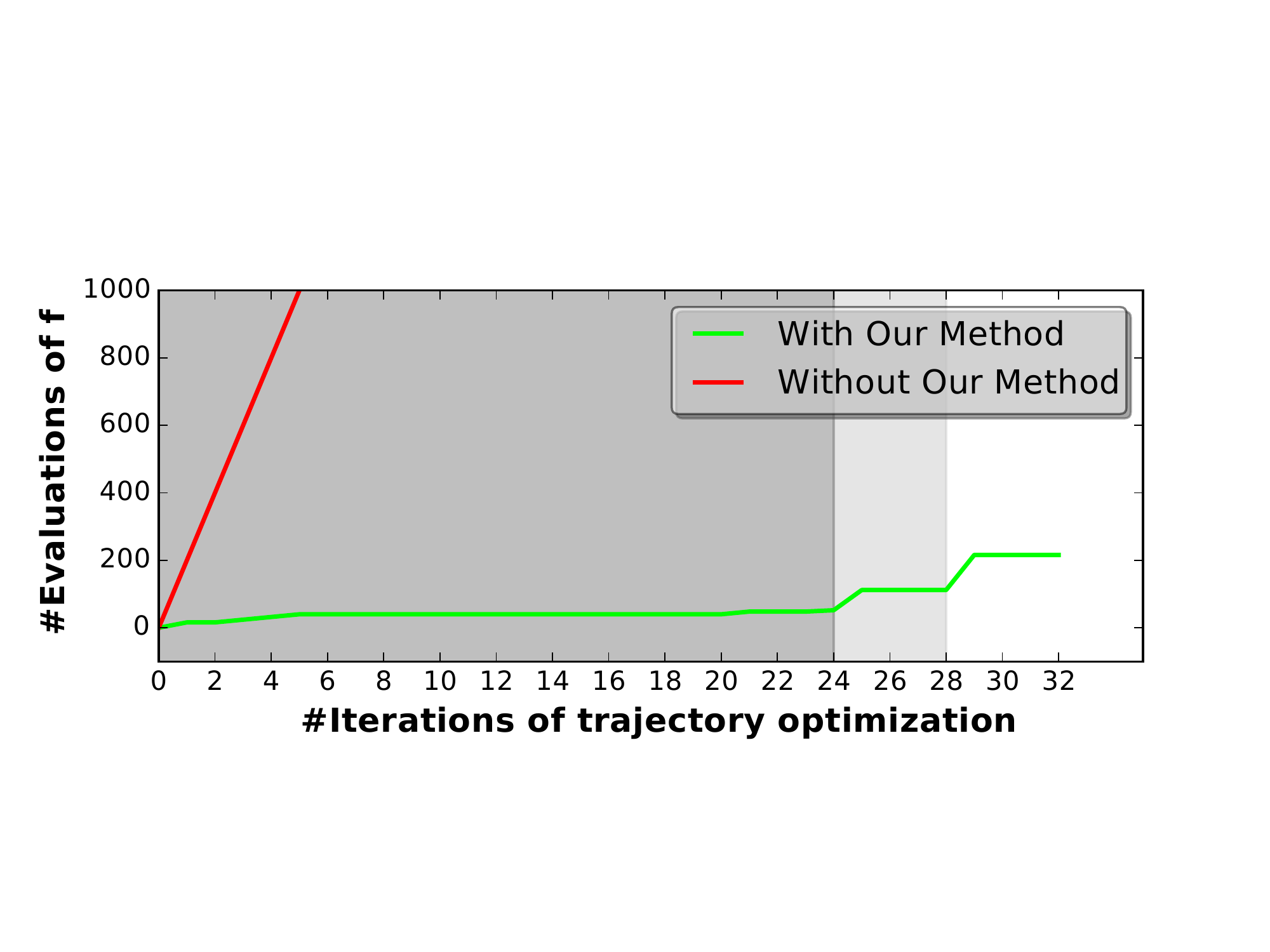}
\includegraphics[width=0.3\textwidth,trim={0.2cm 3.0cm 1.5cm 3.5cm},clip]{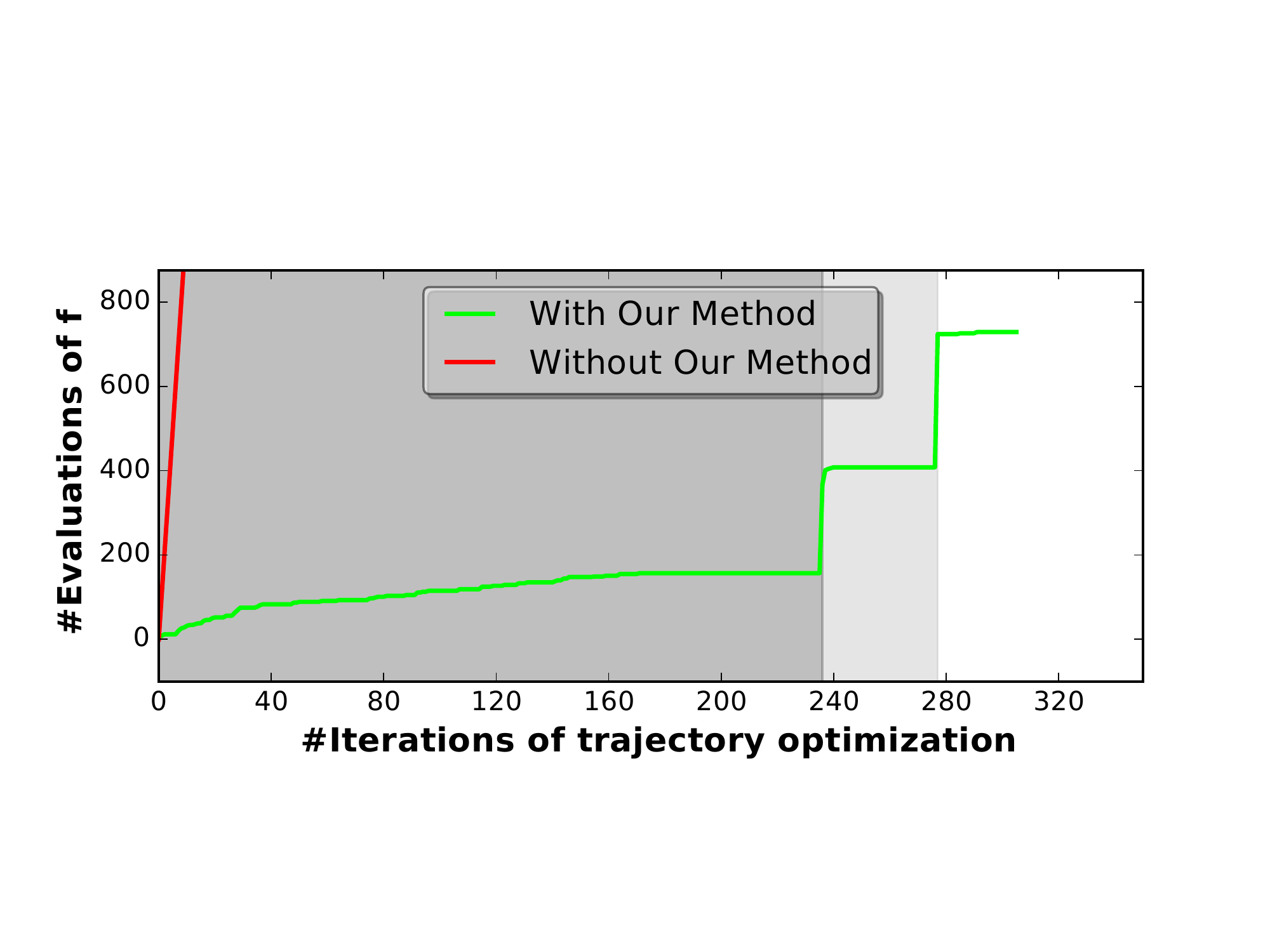}
\includegraphics[width=0.31\textwidth,trim={0.2cm 3.0cm 1.5cm 3.5cm},clip]{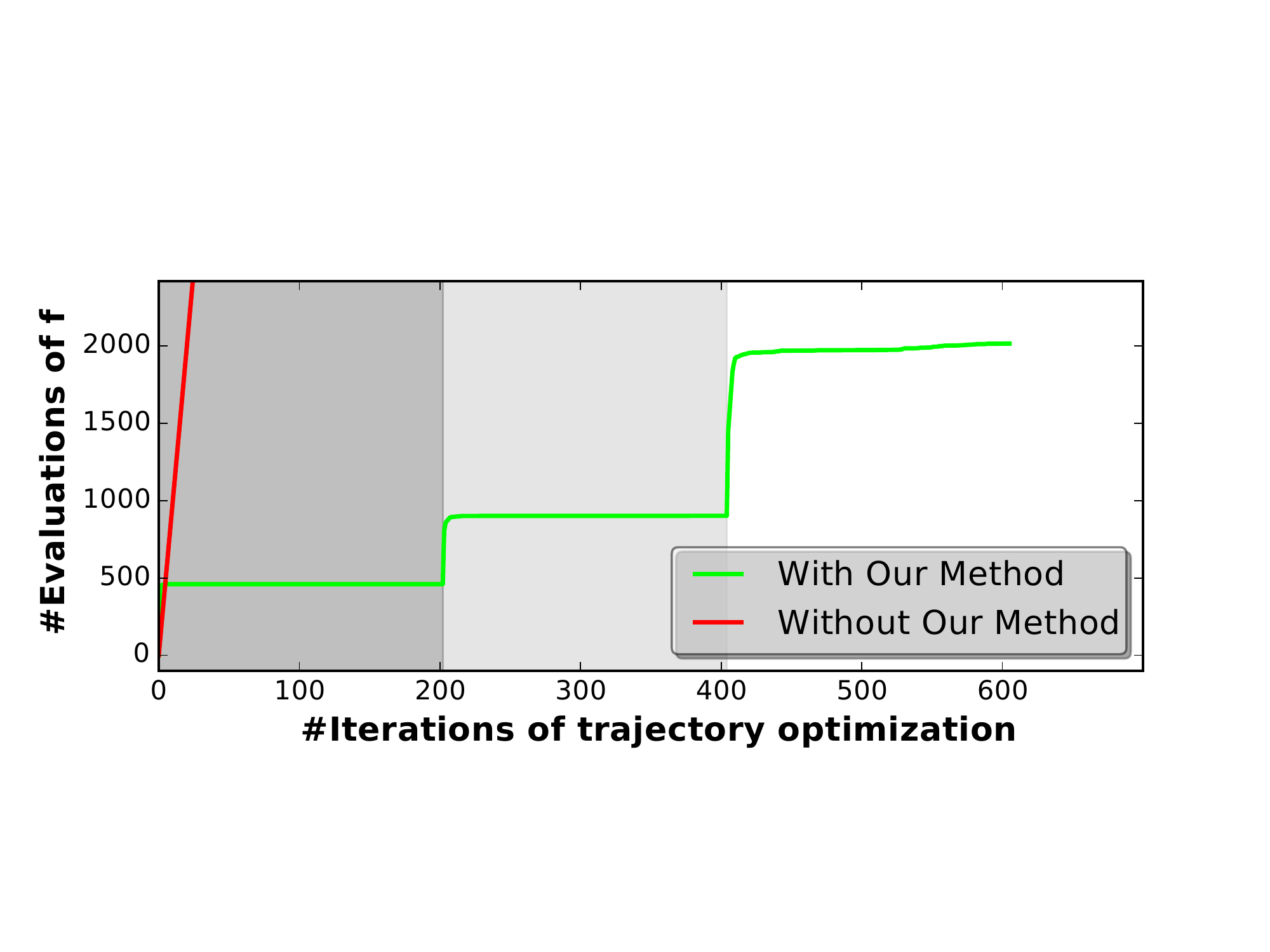}
\put(-337,35){(a)}
\put(-182,35){(b)}
\put(-20,35){(c)}}
\vspace{-2px}
\caption{Number of evaluations of $\E{f}$ plotted against the number of planning iterations with (red) and without (green) our method. (a): Optimization-based motion planning for the deformation soft arm. (b): Optimization-based motion planning for the underwater robot swimmer. (c): Reinforcement learning for the underwater robot swimmer.}
\vspace{-18px}
\label{fig:example}
\end{figure*}

The 3D soft robot arm is controlled by four lines attached to four corners of the arm so that the control signal is 4-dimensional, $|\E{u}|=4$, and each evaluation of $\E{f}$ requires $2^4=16$ grid corner point evaluations. To simulate its dynamics behavior, the soft arm is discretized using a tetrahedra mesh with $525$ vertices so that $\CONF$ has $N=3\times525=1575$ dimensions. To set up the hierarchical grid, we use an initial grid size of $\Delta x=0.5$ and $\eta=0.2$, so we will execute the planning algorithm for $R=3$ times. In this example, we simulate a laser cutter attached to the top of the soft arm and the goal of our motion planning is to have the laser cut out a circle on the metal surface, as shown in \prettyref{fig:frame} (a). We use an optimization-based motion planner \cite{doi:10.1177/0278364914528132}, which solves \prettyref{eq:MP}. The computed motion plan is a trajectory discretized into $K=200$ timesteps. In this case, if we evaluate $\E{f}(\E{x})$ exactly each time, then $200$ evaluations of $\E{f}$ are needed in each iteration of the optimization. To measure the rate of acceleration achieved by our method, we plot the number of exact $\E{f}$ evaluations on grid corner points against the number of iterations of trajectory optimization with and without hierarchical system identification in \prettyref{fig:example} (a). Our method requires $22$ times fewer evaluations and the total computational time is $20$ times faster. The total number of evaluations of function $\E{f}$ for the elastically soft arm is $216$ with system identification and is $4800$ without system identification. We can also added various reward functions to accomplish different planning tasks, such as obstacle avoidance shown in \prettyref{fig:frame} (b).
%\begin{figure}[ht]
%\centering
%\vspace{-5px}
%\includegraphics[width=0.49\textwidth,trim={0cm 3.2cm 0cm 3.5cm},clip]{figure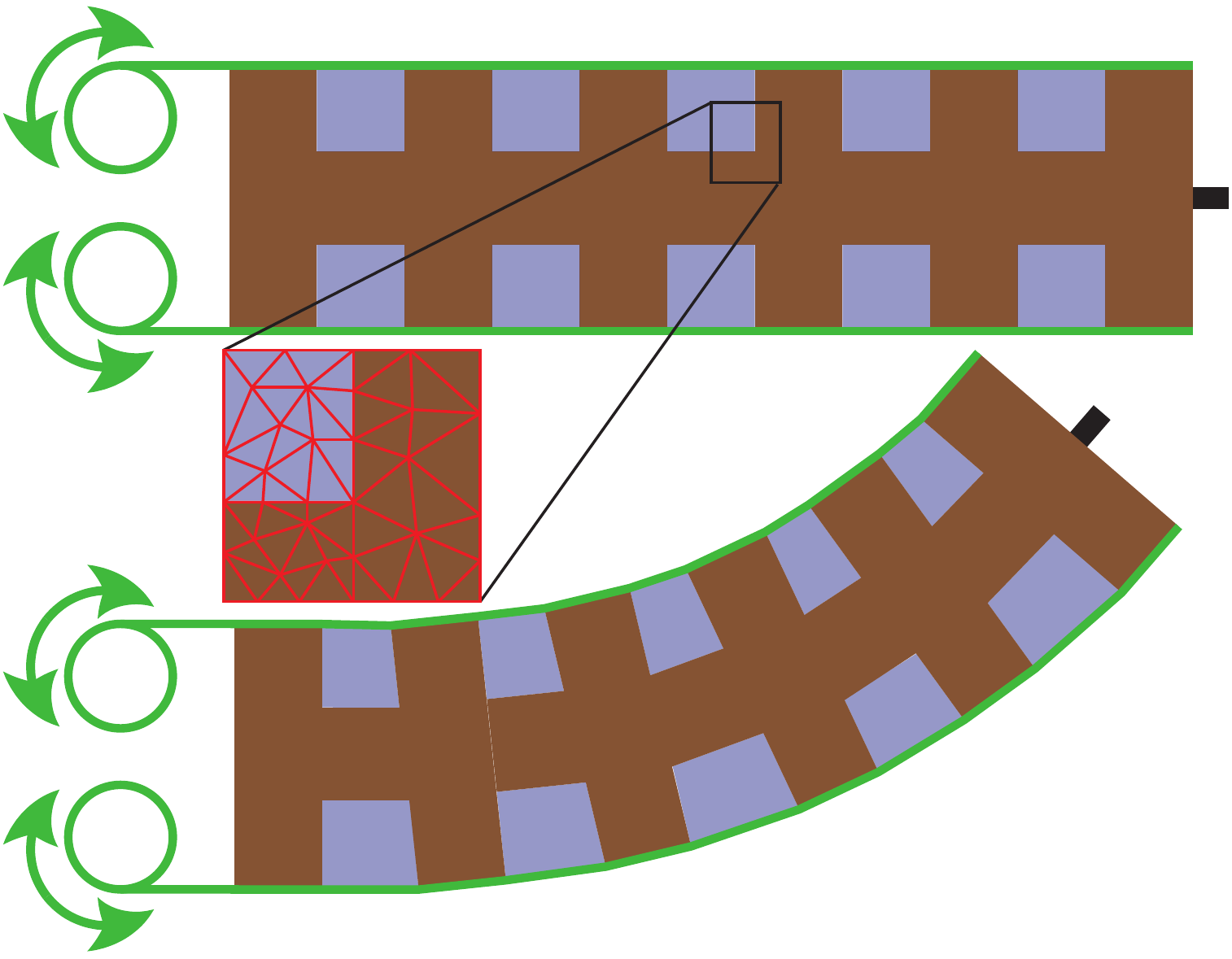}
%\caption{Number of evaluations of $\E{f}$ plotted against the number of planning iterations with (green) and without (red) our method. Our method leads to $22$ times fewer evaluations and $20$ times speedup. We execute the algorithm $3$ times, indicated in dark gray, light gray, and white areas, respectively.}
%\vspace{-10px}
%\label{fig:cutter}
%\end{figure}

For the 3D underwater robot swimmer, the robot has $3$ hinge joints, so $\E{x}$ is $3$-dimensional and $2^3=8$ grid corner points are needed to evaluate $\E{f}$. The fluid potential $\phi$ is discretized on the robot surface with $1412$ vertices, so $\CONF$ of the robot system has $N=3+1412=1415$ dimensions. To set up the hierarchical grid, we use an initial grid size of $\Delta x=0.3$ and $\eta=0.1$, so we will execute the planning algorithm for $R=3$ times. Our goal is to have the robot move forward like a fish, as shown in \prettyref{fig:frame} (c). We use two algorithms to plan the motions for this robot. The first algorithm is an optimization-based planner \cite{doi:10.1177/0278364914528132}, which solves \prettyref{eq:MP}. The resulting plot of the number of exact $\E{f}$ evaluations on grid corner points is shown in \prettyref{fig:example} (b). Our method requires $205$ times fewer evaluations and the estimated total computational time is $190$ times faster. We have also tested our method with reinforcement learning \cite{1707.06347}, which solves \prettyref{eq:CT} and optimizes a feedback swimming controller. This algorithm is also iterative and, in each iteration, \cite{1707.06347} calls the function $\E{g}$ $16384$ times. The resulting plot of the number of exact function $\E{f}$ evaluations during reinforcement learning with and without hierarchical system identification is given in \prettyref{fig:example} (c). Our method requires $1638$ times fewer evaluations and the total computational time is $1590$ times faster.
%now begins the table
\begin{table}[ht]
\vspace{0px}
\setlength{\tabcolsep}{1pt}
\begin{center}
\scalebox{0.75}{
\begin{tabular}{|c|c|c|c|c|c|c|c|c|c|c|}
\toprule
Example & $N$ & $|\CCONF|$ & $\E{f}$ (s) & $\E{g}$ (s) & $\tilde{\E{g}}$ (s) & +HSI (s) & -HSI (s) & Speedup & \#Corner & Err	\\
\midrule
\TWORCellC{Deformation Arm}{Trajectory Optimization} &
1575 & 4 & 1.5 & 1.51  & 0.01 & 5.5 & 305 & 20 & 216 & $7e-6$  \\
\midrule
\TWORCellC{Swimming Robot}{Trajectory Optimization}  &
1415 & 3 & 0.9 & 0.902 & 0.02 & 3.1 & 183 & 190 & 732 & $2e-5$  \\
\midrule
\TWORCellC{Swimming Robot}{Reinforcement Learning}   &
1415 & 3 & 0.9 & 0.902 & 0.02 & 42 & 16424 & 1590 & 1973 & $5e-5$  \\
\bottomrule
\end{tabular}}
\end{center}
\vspace{-8px}
\caption{\label{table:bench} Summary of computational cost. From left to right: name of example, DOF of the robot system, dimension of $|\CCONF|$, cost of evaluating $\E{f}$, cost of evaluating $\E{g}$, cost of evaluating $\E{g}$ using system identification ($\tilde{\E{g}}$), cost of each iteration of the planning algorithm with system identification, cost of each iteration without system identification (estimated), overall speedup, number of grid corner points evaluated, relative approximation error computed from: $\|\E{g}(\E{x}_i,\dot{\E{x}}_i,\E{u}_i)-\tilde{\E{g}}(\E{x}_i,\dot{\E{x}}_i,\E{u}_i)\|/\|\E{g}(\E{x}_i,\dot{\E{x}}_i,\E{u}_i)\|$.}
\vspace{-20px}
\end{table}
\subsection{Comparisons}
Several prior works solve problems similar to those in our work. To control an elastically soft robot arm, \cite{guoxin} evaluates $\E{g}$ and its differentials using finite difference in the space of control signals, $\CCONF$. However, this method does not take dynamics into consideration and takes minutes to compute each motion plan in 2D workspaces. Other methods \cite{Gayle05pathplanning} only consider soft robots with a very coarse FEM discretization and do not scale to high-DOF cases. To control an underwater swimming robot, \cite{todorov2012mujoco} achieves real-time performance in terms of evaluating the forward dynamics function, but they used a simplified fluid drag model; we use the more accurate potential flow model \cite{Kanso2005} for the fluid. Finally, the key difference between our method and previous system identification methods such as \cite{5537836,NIPS2008_3385,LGPR2009,55121,7902097} is that we do not identify the entire forward dynamics function $\E{g}$. Instead, we choose to identify a novel function $\E{f}$ from $\E{g}$ that encodes the computationally costly part of $\E{g}$ and does not suffer from a-curse-of-dimensionality.
\section{Conclusion and Limitations}
We present a hierarchical, grid-based data structure for performing system identification for high-DOF soft robots. Our key observation is that these robots are highly underactuated. We compute a low dimensional approximation of the dynamics function and use that to accelerate the computation. As a result, we can precompute $\E{f}$ on a grid without suffering from a-curse-of-dimensionality. The construction is performed in an on-demand manner and the entire hierarchy construction is interleaved with the motion planning or control algorithms. These techniques effectively reduce the number of grid corner points to be evaluated and thus reduce the total running time by one to two orders of magnitude.

One major limitation of the current method is that the function $\E{f}$ cannot always be identified and there is no general method known to identify such a function for all types of robot systems. Moreover, our method is only effective when $\mathbb{C}_c$ is very low-dimensional. Another major issue is that we cannot guarantee that function $\E{f}$ is a one-to-one mapping. Indeed, a single control input can lead to multiple quasistatic poses for a soft robot arm. Therefore, a major direction of future research is to extend our grid-based structure to handle functions with special properties such as one-to-many function mappings and discontinuous functions. Finally, to further reduce the number of grid corner points to be evaluated, we are interested in using a spatially varying grid resolution in which higher grid resolutions are used in regions where function $\E{f}$ changes rapidly. 

\ifarxiv
\section*{Proof of \prettyref{Lm:A}}
We prove \prettyref{Lm:A} for the elastically deformable soft arm, the forward dynamics function of which is \prettyref{eq:V1DP}. The case with the underwater swimming robot is similar. Before our derivation, we note that \prettyref{eq:V1DP} involves the latent variable $\boldsymbol{\alpha}$, which complicates our derivation. We first transform the variables by plugging $\E{x}=\E{f}(\boldsymbol{\alpha})$ into \prettyref{eq:MP}. In this way, we eliminate $\E{x}$, only keep $\boldsymbol{\alpha}$, and \prettyref{eq:MP} becomes:
\begin{small}
\begin{align*}
&\argmax{\E{u}_1,\cdots,\E{u}_{K-1}}\sum_{i=1}^K \mathcal{R}(\E{f}(\boldsymbol{\alpha}_i),\E{u}_i)   \\
&\E{s.t.}\quad\E{g}(\boldsymbol{\alpha}_i,\dot{\boldsymbol{\alpha}}_i,\E{u}_i)=\TWO{\boldsymbol{\alpha}_{i+1}}{\dot{\boldsymbol{\alpha}}_{i+1}}.
\end{align*}
\end{small}
Next, we replace $\boldsymbol{\alpha}$ with $\E{x}$ for notational consistency, giving:
\begin{align}
\label{eq:MP2}
&\argmax{\E{u}_1,\cdots,\E{u}_{K-1}}\sum_{i=1}^K \tilde{\mathcal{R}}(\E{x}_i,\E{u}_i)   \\
&\E{s.t.}\quad\E{G}_i=0\quad\forall 1\leq i<K,\nonumber
\end{align}
where we have: 
\begin{align*}
\E{G}_i\triangleq\E{g}(\E{x}_i,\frac{\E{x}_i-\E{x}_{i-1}}{\Delta t},\E{u}_i)-\TWO{\E{x}_{i+1}}{\frac{\E{x}_{i+1}-\E{x}_i}{\Delta t}},
\end{align*}
assuming finite difference approximation as is used in \prettyref{eq:V1DP}. And we define $\tilde{\mathcal{R}}(\E{x}_i,\E{u}_i)\triangleq\mathcal{R}(\E{f}(\E{x}_i),\E{u}_i)$. Our proof is based on \prettyref{eq:MP2} and we can perform the same transformation for \prettyref{eq:CT}. This transformation does not change the smoothness properties of various functions. Also note that, since we use $\E{f}$ as a shape embedding function in case of elastically deformable soft arm, $\E{f}$ is the forward kinematic function which is required to be non-singular. In the first part of the proof, we show that \prettyref{eq:MP2} satisfy LICQ \cite{Andreani2005}, so every local minimum satisfies the KKT condition.
\begin{lemma}\label{Lm:LICQ}
Assuming the functions $\tilde{\mathcal{R}},\E{G}$ are sufficiently smooth, the solution space of $\E{x}$ is bounded, and the forward kinematic function is non-singular, then there exists a small enough $\Delta t$ such that LICQ holds.
\end{lemma}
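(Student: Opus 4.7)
The plan is to show that at any feasible $z\triangleq(\E{x}_1,\dots,\E{x}_K,\E{u}_1,\dots,\E{u}_{K-1})$ the constraint Jacobian $\partial \E{G}/\partial z$ has full row rank. Rather than analyze the full Jacobian, I would exhibit a square nonsingular sub-block by restricting to the trailing state variables $(\E{x}_2,\dots,\E{x}_K)$. Since $\E{G}_i$ only depends on $\E{x}_{i-1},\E{x}_i,\E{x}_{i+1},\E{u}_i$, this restricted Jacobian has a block-banded pattern that is straightforward to analyze.

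The key computation is that $\partial\E{G}_i/\partial\E{x}_{i+1}=-(I,I/\Delta t)^T$ because $\E{g}$ takes only $\E{x}_i,\dot{\E{x}}_i,\E{u}_i$ as inputs, so $\E{x}_{i+1}$ enters $\E{G}_i$ solely through the subtraction term on the right-hand side. Moreover $\partial\E{G}_i/\partial\E{x}_j=0$ whenever $j\geq i+2$. Ordering the constraints by $i$ and the state variables by $\E{x}_{i+1}$, the resulting sub-Jacobian $\partial\E{G}/\partial(\E{x}_2,\dots,\E{x}_K)$ is therefore block lower triangular with each diagonal block of full column rank $N$. The velocity-slot rows of each $\E{G}_i$ are linearly dependent on the position-slot rows because $\E{g}$ produces a state whose velocity is consistent with the finite-difference reconstruction from its position output, so after quotienting these out, the $N(K-1)\times N(K-1)$ sub-Jacobian is nonsingular, which proves LICQ.

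The role of the smoothness hypothesis and the smallness of $\Delta t$ is to guarantee that $\E{g}$ is itself a $\CONE$ function of its inputs. Applying the implicit function theorem to the discretization \prettyref{eq:V1D} requires the linearization $\E{M}/\Delta t^2-\partial(\E{p}+\E{c})/\partial\E{x}_{i+1}$ to be invertible, which holds once $\Delta t$ is small enough that the mass term dominates the bounded force Jacobians on the bounded solution space; an analogous argument with the coupled BEM matrix covers \prettyref{eq:V2D} for the swimmer, and the non-singularity of the forward kinematic map is what guarantees that the change of variables $\E{x}=\E{f}(\balpha)$ preserves smoothness. I expect the main obstacle to be the bookkeeping around the redundant velocity rows of $\E{G}_i$: one must either collapse $\E{G}_i$ to its position component before invoking LICQ, or verify that KKT multipliers paired with the dependent rows can always be re-absorbed into those paired with the position rows without altering stationarity; once this is handled cleanly, the triangular elimination above yields LICQ immediately and supplies the KKT structure needed in the proof of \prettyref{Lm:A}.
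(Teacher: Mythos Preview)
Your triangular-block idea is the right skeleton, but the explicit form of $\E{G}_i$ you use makes the argument break before it starts: with $\partial\E{G}_i/\partial\E{x}_{i+1}=-(I,I/\Delta t)^T$ each diagonal block is $2n\times n$, and the velocity rows are (as you observe) scalar multiples of the position rows, so the full constraint Jacobian has $2n(K-1)$ rows of rank only $n(K-1)$ and LICQ \emph{fails} for this constraint set. You flag this as the ``main obstacle'' and propose collapsing to the position component or re-absorbing multipliers, but neither fix is carried out; as written, the argument does not establish LICQ.

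The paper avoids the redundancy by silently replacing $\E{G}_i$ with the \emph{implicit residual} of the projected dynamics (the left-hand minus right-hand side of \prettyref{eq:V1DP}), which yields exactly $n=|\balpha|$ equations per step and no dependent rows. In that form the diagonal block $\partial\E{G}_i/\partial\E{x}_{i+1}$ is no longer constant: it splits as $\E{A}+\E{B}+\tfrac{1}{\Delta t^2}\E{C}$ with $\E{C}=\bigl(\FPP{\E{f}}{\E{x}_{i+1}}\bigr)^T\E{M}\,\FPP{\E{f}}{\E{x}_{i+1}}$. The non-singularity of the forward kinematic map $\E{f}$ is precisely what gives $\E{C}$ full rank, boundedness of the solution space bounds $\E{A},\E{B}$, and small $\Delta t$ then lets $\tfrac{1}{\Delta t^2}\E{C}$ dominate. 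Thus in the paper both the non-singularity and the small-$\Delta t$ hypotheses enter the \emph{rank} argument directly, whereas in your outline they serve only to make $\E{g}$ well-defined and $\CONE$ via the implicit function theorem. If you do carry out the collapse to the position-only constraint $\E{G}_i^{\text{pos}}$, your route becomes correct and in fact simpler (diagonal blocks are $-I$), but that collapse must be made explicitly \emph{before} LICQ is invoked, not deferred as bookkeeping.
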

\begin{proof}
LICQ requires the constraint Jacobian $\mathcal{J}$ to have full rank. Our constraint Jacobian $\mathcal{J}$ takes the following form:
\begin{align*}
\mathcal{J}=\TWO{\FPP{\E{G}}{\E{x}}}{\FPP{\E{G}}{\E{u}}},
\end{align*}
where $\FPPROW{\E{G}}{\E{x}}$ is a square, block-lower-triangular matrix:
\begin{align*}
\FPP{\E{G}}{\E{x}}=
\left(\begin{array}{cccc}
 \FPP{\E{G}_0}{\E{x}_1} & & & \\
 \FPP{\E{G}_1}{\E{x}_1} & \FPP{\E{G}_1}{\E{x}_2} & & \\
 \FPP{\E{G}_2}{\E{x}_1} & \FPP{\E{G}_2}{\E{x}_2} & \FPP{\E{G}_2}{\E{x}_3} & \\
 & & & \ddots \\
\end{array}\right),
\end{align*}
where $\E{G}_i$ is the implicit form of $\E{g}$. For the elastically deformable soft arm, this is \prettyref{eq:V1DP}:
\begin{tiny}
\begin{align*}
&\E{G}_i\triangleq    \\
&\FPP{\E{f}(\E{x}_{i+1})}{\E{x}_{i+1}}^T
\left[\E{M}\frac{\E{f}(\E{x}_{i+1})-2\E{f}(\E{x}_i)+\E{f}(\E{x}_{i-1})}{\Delta t^2}-\E{p}(\E{f}(\E{x}_{i+1})) + \E{c}(\E{f}(\E{x}_{i+1}),\E{u}_i)\right].
\end{align*}
\end{tiny}
As long as $\FPPROW{\E{G}_i}{\E{x}_{i+1}}$ has full rank, $\mathcal{J}$ has full rank and LICQ is satisfied. We have the following form of $\FPPROW{\E{G}_i}{\E{x}_{i+1}}$:
\begin{tiny}
\begin{align*}
&\FPP{\E{G}_i}{\E{x}_{i+1}}=\E{A}+\E{B}+\frac{1}{\Delta t^2}\E{C} \\
&\E{A}\triangleq \FPPT{\E{f}(\E{x}_{i+1})}{\E{x}_{i+1}}^T
\left[\E{M}\frac{\E{f}(\E{x}_{i+1})-2\E{f}(\E{x}_i)+\E{f}(\E{x}_{i-1})}{\Delta t^2}-\E{p}(\E{f}(\E{x}_{i+1})) + \E{c}(\E{f}(\E{x}_{i+1}),\E{u}_i)\right]  \\
&\E{B}\triangleq \FPPT{\E{f}(\E{x}_{i+1})}{\E{x}_{i+1}}^T\FPP{\left[-\E{p}(\E{f}(\E{x}_{i+1})) + \E{c}(\E{f}(\E{x}_{i+1})\right]}{\E{x}_{i+1}}
\quad\E{C}\triangleq\FPP{\E{f}(\E{x}_{i+1})}{\E{x}_{i+1}}^T\E{M}\FPP{\E{f}(\E{x}_{i+1})}{\E{x}_{i+1}}.
\end{align*}
\end{tiny}
When $\Delta t$ is sufficiently small, $\E{A}$ is upper bounded because the $\Delta t$-dependent term, $\frac{\E{f}(\E{x}_{i+1})-2\E{f}(\E{x}_i)+\E{f}(\E{x}_{i-1})}{\Delta t^2}\to\ddot{\E{f}}(\E{x}_i)$. $\ddot{\E{f}}(\E{x}_i)$ is bounded because $\E{x}_i$ is bounded and $\E{f}$ is smooth. $\E{B}$ is bounded because it is independent of $\Delta t$. Finally, we can choose $\Delta t$ small enough so that $\E{rank}(\FPPROW{\E{G}_i}{\E{x}_{i+1}})=\E{rank}(\E{C})$. We also assume the forward kinematic function (function $\E{f}$ in the case of elastically soft arm) is non-singular so that $\E{C}$ has full rank. As a result, $\FPP{\E{G}_i}{\E{x}_{i+1}}$ has full rank for all $i$ and LICQ holds.
\end{proof}
Note that \prettyref{Lm:LICQ} holds for both exact function $\E{f}$ and approximate $\E{f}$ by hierarchical system identification. Our approximate $\E{f}$ is derived using spline interpolation, which is sufficiently smooth. Given \prettyref{Lm:LICQ}, the convergence of \prettyref{Alg:HSI} (\prettyref{Lm:A}) is obvious and the proof is an extension to Theorem 1.21 of \cite{izmailov2014newton} as follows:
\begin{proof}
When we let iteration number $r\to\infty$ in \prettyref{Alg:HSI}, we will solve for a sequence of motion plans $\mathbb{P}^{1,2,\cdots}$, where $\mathbb{P}^r$ consists of $\THREE{\E{x}_1}{\cdots}{\E{x}_K}$ and satisfies the KKT condition due to \prettyref{Lm:LICQ}. Therefore we have, with a slight abuse of notations:
\begin{align}
\label{eq:KKT}
\FPP{\tilde{\mathcal{R}}}{\mathbb{P}^r}+\FPP{\E{G}(\mathbb{P}^r)}{\mathbb{P}^r}^T\lambda=0\quad\E{G}(\mathbb{P}^r)=0,
\end{align}
where $\lambda$ is the Lagrange multipliers. Since $\E{x}$ is bounded, $\mathbb{P}^r$ is bounded and the sequence $\mathbb{P}^r$ will have an accumulation point $\mathbb{P}^\infty$ in the compact domain. The remaining issue is to show that $\mathbb{P}^\infty$ satisfies the KKT condition and that $\mathbb{P}^\infty$ is the unique accumulation point.

\TE{$\mathbb{P}^\infty$ satisfies the KKT condition:} Note that, when using a grid to approximate $\E{f}$ with grid size $\Delta x$, we are essentially defining a new KKT system by modifying $\E{G}$:
\begin{align}
\label{eq:KKT_HSI}
\FPP{\tilde{\mathcal{R}}}{\mathbb{P}}+\FPP{\E{G}(\Delta x,\mathbb{P})}{\mathbb{P}}^T\lambda=0\quad\E{G}(\Delta x,\mathbb{P})=0,
\end{align}
where the reward $\tilde{\mathcal{R}}$ is not approximated by our grid so it is not a function of $\Delta x$. By changing $\Delta x$, we get a one-parameter set of sufficiently smooth KKT problems. By passing \prettyref{eq:KKT_HSI} onto infinity, we have $\mathbb{P}^\infty$ satisfying the KKT condition of \prettyref{eq:MP2}.

\TE{$\mathbb{P}^\infty$ is unique:} $\mathbb{P}^\infty$ is the strict local solution at $\Delta x=0$ and there must be a neighborhood $\mathcal{B}(\mathbb{P}^\infty,\delta_1)$ in which $\mathbb{P}^\infty$ is the global solution. By the Weierstrass theorem, there is a neighborhood $|\Delta x|<\delta_2$, such that \prettyref{eq:KKT_HSI} has a global solution when $\mathbb{P}\in\mathcal{B}(\mathbb{P}^\infty,\delta_1)$ and $|\Delta x|<\delta_2$. 

Now that $\mathbb{P}^\infty$ is an accumulation point, there must be a large enough iteration number $r_1$ in \prettyref{Alg:HSI}, such that $\mathbb{P}^{r_1}\in\mathcal{B}(\mathbb{P}^\infty,\delta_1)$ and $|\Delta x|<\delta_2$. From this iteration onwards, every $\mathbb{P}^{r>r_1}\in\mathcal{B}(\mathbb{P}^\infty,\delta_1)$. This is because \prettyref{Alg:HSI} uses $\mathbb{P}^{r-1}$ as the initial guess and $\mathbb{P}^{r-1}\in\mathcal{B}(\mathbb{P}^\infty,\delta_1)$. Therefore, $\mathbb{P}^r$ is the local minimum in the same basin area of $\mathbb{P}^{r-1}$, which is the global minimum in $\mathcal{B}(\mathbb{P}^\infty,\delta_1)$. As a result, $\mathbb{P}^{r>r_1}\in\mathcal{B}(\mathbb{P}^\infty,\delta_1)$ by mathematical induction.

Finally, if the sequence $\mathbb{P}^{1,2,\cdots}$ converges to an accumulation point $\bar{\mathbb{P}}^\infty$, then $\bar{\mathbb{P}}^\infty$ is the solution to the KKT system at $\Delta x=0$ and $\bar{\mathbb{P}}^\infty\in\mathcal{B}(\mathbb{P}^\infty,\delta_1)$. But there is only one global minimum for this problem in $\mathcal{B}(\mathbb{P}^\infty,\delta_1)$, so that $\bar{\mathbb{P}}^\infty=\mathbb{P}^\infty$.
\end{proof}
\fi
%\section{Acknowledgement}
\clearpage
%%%%%%%%%%%%%%%%%%%%%%%%%%%%%%%%%%%%%%%%%%%%%%%%%%%%%%%%%%%%%%%%%%%%%%%%%%%%%%%%

\bibliographystyle{IEEEtran}
\bibliography{main}
\end{document}